\documentclass{article}
\usepackage[margin=1in]{geometry}
\usepackage[utf8]{inputenc}
\usepackage[colorlinks,linkcolor=black,citecolor=black,urlcolor=magenta,bookmarks=true]{hyperref}
\usepackage[dvipsnames]{xcolor}
\usepackage{subcaption}
\usepackage{multirow}


\usepackage{amsmath,amsthm,amssymb,dsfont}

\usepackage{nicefrac} 

\usepackage[conf={normal}]{proof-at-the-end}

\usepackage[utf8]{inputenc}
\usepackage[english]{babel}

\usepackage[backend=biber,maxbibnames=8,style=numeric,maxcitenames=2,backref=false,uniquelist=false,uniquename=false,sorting=none,noerroretextools=true,url=false,doi=false,isbn=false]{biblatex}
\usepackage{csquotes}
\renewbibmacro{in:}{%
	\ifentrytype{article}{}{%
		\printtext{\bibstring{in}\intitlepunct}}}
\renewbibmacro*{volume+number+eid}{%
	\printfield{volume}%
	\setunit*{\addnbspace}
	\printfield{number}%
	\setunit{\addcomma\space}%
	\printfield{eid}}
\renewbibmacro*{volume+number+eid}{%
	\setunit*{\addcomma\space}
	\printfield{volume}%
	\setunit*{\addcomma\space}
	\printfield{number}%
	\setunit{\addcomma\space}%
	\printfield{eid}
}
\DeclareFieldFormat[article]{volume}{\bibstring{volume}~#1}
\DeclareFieldFormat[article]{number}{\bibstring{number}~#1}

\newbibmacro{string+doiurlisbn}[1]{%
  \iffieldundef{doi}{%
    \iffieldundef{url}{%
      \iffieldundef{isbn}{%
        \iffieldundef{issn}{%
          #1%
        }{%
          \href{http://books.google.com/books?vid=ISSN\thefield{issn}}{#1}%
        }%
      }{%
        \href{http://books.google.com/books?vid=ISBN\thefield{isbn}}{#1}%
      }%
    }{%
      \href{\thefield{url}}{#1}%
    }%
  }{%
    \href{http://dx.doi.org/\thefield{doi}}{#1}%
  }%
}
\DeclareFieldFormat{title}{\usebibmacro{string+doiurlisbn}{\mkbibemph{#1}}}
\DeclareFieldFormat[article,incollection,unpublished,inproceedings]{title}%
{\usebibmacro{string+doiurlisbn}{\mkbibquote{#1}}}
\addbibresource{ref.bib}
\newcommand{\citep}{\parencite}
\newcommand{\citet}{\textcite}

\usepackage{cleveref}

\usepackage{booktabs}
\usepackage{enumitem}

\usepackage{xspace}
\usepackage{graphicx,multirow}
\graphicspath{{figs/}}

\usepackage{tikz}
\usetikzlibrary{matrix,arrows,shadows,positioning,fadings,calc,shapes,fit}
\usepackage{pgfplots}

\let\etoolboxforlistloop\forlistloop 
\let\forlistloop\etoolboxforlistloop 

\usepackage[linesnumbered,ruled,vlined]{algorithm2e}
\crefname{algocf}{algorithm}{algorithms}
\Crefname{algocf}{Algorithm}{Algorithms}

\usepackage{todonotes}


\newcommand{\Dc}{\mathcal{D}}
\newcommand{\Fc}{\mathcal{F}}
\newcommand{\Mc}{\mathcal{M}}
\newcommand{\Rc}{\mathcal{R}}
\newcommand{\Dds}{\mathds{D}}

\newcommand{\EE}{\mathds{E}}

\newcommand{\RR}{\mathds{R}}


\usepackage{thmtools, thm-restate}
\usepackage[normalem]{ulem}
\theoremstyle{definition}
\newtheorem{definition}{Definition}[section]
\newtheorem{theorem}{Theorem}[section]
\newtheorem{remark}{Remark}[section]

\pgfplotsset{compat=1.17} 

\newcommand{\dpvae}{DP\textsuperscript{2}-VAE\xspace}
\title{\dpvae: Differentially Private Pre-trained Variational Autoencoders}
\author{Dihong Jiang, Guojun Zhang, Mahdi Karami, Xi Chen, Yunfeng Shao, Yaoliang Yu}

\begin{document}

\maketitle

\begin{abstract}
    Modern machine learning systems achieve great success when trained on large datasets. However, these datasets usually contain sensitive information (e.g. medical records, face images), leading to serious privacy concerns. Differentially private generative models (DPGMs) emerge as a solution to circumvent such privacy concerns by generating privatized sensitive data. In this paper, we propose \dpvae, a novel training mechanism for variational autoencoders (VAE) with provable DP guarantees via \emph{pre-training on private data}. \dpvae is flexible and can be easily amenable to many other VAE variants. 
    We theoretically study the effect of pretraining on private data, and empirically verify its effectiveness on image dataset. 
\end{abstract}

\section{Introduction}
The success of modern machine learning (ML) algorithms and applications highly relies on the access to large-scale datasets \citep{DengDSLLL09,LewisYTL04,BennettL07}. However, there are increasing concerns on privacy leakage during the use of the data, especially the sensitive data (e.g. face images, medical records) that can be exploited by a malicious party, even though the original ML applications never intentionally do so. For example, \citet{FredriksonJR15} show that an attacker, who is only given a name and the white-box access to a face recognition model, can successfully recover face images of a particular person who appears in the training set (which is also known as the model inversion attack).

Prior efforts on developing privacy-preserving techniques include naive data anonymization \citep{NarayananS08}, $k$-anonymity \citep{Sweeney02}, $l$-diversity \citep{MachanavajjhalaKGV07}, $t$-closeness \citep{LiLV07}, semantic security \citep{GoldwasserM84}, information-theoretic privacy \citep{DiazWCS19}, and differential privacy (DP) \citep{Dwork06}, where the last one is recognized as a rigorous quantization of privacy, and becomes the gold-standard in current ML community. \citet{AbadiCGMMTZ16} propose DP-SGD algorithm, which then becomes the standard technique to train a DP learner. The core steps of DP-SGD are clipping gradient norm and injecting Gaussian noise to the gradient. 

Differentially private generative model (DPGM) aims to generate synthetic data that are distributionally similar to the private data while satisfying differential privacy guarantee, so that no one can infer private information from the generation. The major benefits of DPGMs are two-fold: (1) As a proxy for releasing private data; (2) Benefiting private data analysis tasks (e.g. data querying, ML tasks), i.e.~one can generate as much synthetic data as desired for data analysis tasks with DPGMs without incurring further privacy cost, as ensured by the post-processing theorem \citep{DworkR14}. 

Generative adversarial network (GAN) \citep{GoodfellowPMXWOCB14} attracts most attention in developing DPGMs \citep{XieLWWZ18, TorkzadehmahaniKP19, JordonYV19, LongWYKZGL21, AugensteinMRRKCMA20, ChenOF20}, while the related works based on variational autoencoder (VAE) \citep{KingmaWelling13} are relatively limited \citep{ChenXXLBKZ18, AcsMCD18, TakahashiTOK20}. Among related works, G-PATE \citep{LongWYKZGL21} first notes that in order to learn a DP generator, it is not necessary to make the discriminator DP, because only the generator will be released. This idea is also utilized by GS-WGAN \citep{ChenOF20}. We found that VAE is a natural model to be considered for further exploring this idea, because only the decoder of VAE needs to be released. Our additional motivation for considering VAE over GAN is \emph{two-fold}: ({\bf 1}) The minimax optimization of GAN leads to training instability \citep{MeschederGN18}, while VAE is easier to train. ({\bf 2}) VAE can estimate the joint density of input and latent variables, while GAN cannot.

The privacy-utility trade-off is one of the most important challenges in DP ML systems, i.e.~privacy is preserved at a cost of model utility. A line of recent works show that leveraging large public datasets (where there is little privacy leakage concern) as additional knowledge to pre-train a model which is then fine-tuned on private data with DP training algorithms can significantly improve the utility of a DP learner while keeping the same level of DP guarantee \citep{LuoWAL21, YuNBGHKKLMWYZ22, LiTLH22}. 

We remark that if the pre-training is conducted on private data, then the resulting model utility will be further improved, since the distribution shift between public and private data disappears \citep{WilesGSRKDC22}. However, DP fine-tuning a non-privately pretrained model in a straightforward manner may still violate DP guarantees (see \Cref{rmk:non-private}). How to pre-train on private data without breaching DP guarantee thus remains a tantalizing open question. 
In this work, we give a positive answer to this question by proposing a novel mechanism for training a differentially private pretrained (conditional) VAE (DP\textsuperscript{2}-VAE).

\section{Preliminary}
\label{sec:preliminary}
In this section, we recall background knowledge in differential privacy.

\subsection{Differential privacy}
Differential privacy is widely regarded as a rigorous quantization of privacy, which upper bounds the deviation in the output distribution of a randomized
algorithm given incremental deviation in input. Formally, we have the following definition:
\begin{definition}[$(\epsilon, \delta)$-DP \citep{DworkR14}]
A randomized mechanism $\mathcal{M}: \mathcal{D} \rightarrow \mathcal{R}$ with domain $\mathcal{D}$ and range $\mathcal{R}$ satisfies $(\epsilon,\delta)$-differential privacy if for any two adjacent inputs $d,d'\in\mathcal{D}$ and for any subset of
outputs $\mathcal{S}\subseteq\mathcal{R}$ it holds that
\begin{align}
    \mbox{Pr}[\mathcal{M}(d)\in\mathcal{S}]\leq \exp(\epsilon) \cdot \mbox{Pr}[\mathcal{M}(d')\in\mathcal{S}]+\delta
\end{align}
where adjacent inputs (a.k.a. neighbouring datasets) only differ in one entry. Particularly, when $\delta=0$, we say that $\mathcal{M}$ is $\epsilon$-DP.
\end{definition}

A famous theorem, i.e.~post-processing theorem, which is utilized by existing works (as well as ours) for proving DP guarantee of a published model, is given by: 
\begin{theorem}[Post-processing theorem, {\citep{DworkR14}}]
\label{theory:post}
If $\mathcal{M}$ satisfies $(\epsilon, \delta)$-DP, $F\circ \mathcal{M}$ will satisfy $(\epsilon, \delta)$-DP for any
function $F$ with $\circ$ denoting the composition operator.
\end{theorem}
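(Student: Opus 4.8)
The plan is to first dispose of the case where $F$ is deterministic and then bootstrap to an arbitrary (possibly randomized) post-processing map. The entire argument hinges on a single observation: $(\epsilon,\delta)$-DP is a statement about \emph{events} in the output space, and composing with $F$ merely pulls such events back through $F$, producing a new event to which the hypothesis can be applied directly.

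First I would treat deterministic $F$. Fix two adjacent inputs $d,d'\in\mathcal{D}$ and an arbitrary measurable set $\mathcal{S}$ in the range of $F$. The key step is to rewrite the event $\{F(\mathcal{M}(d))\in\mathcal{S}\}$ in terms of $\mathcal{M}$ alone: set $T=\{r\in\mathcal{R}: F(r)\in\mathcal{S}\}=F^{-1}(\mathcal{S})$, which is itself a subset of $\mathcal{R}$. Since $F(\mathcal{M}(d))\in\mathcal{S}$ if and only if $\mathcal{M}(d)\in T$, we have
\[
\mbox{Pr}[F(\mathcal{M}(d))\in\mathcal{S}]=\mbox{Pr}[\mathcal{M}(d)\in T].
\]
Applying the $(\epsilon,\delta)$-DP guarantee of $\mathcal{M}$ to the particular subset $T\subseteq\mathcal{R}$ then gives
\[
\mbox{Pr}[\mathcal{M}(d)\in T]\le e^{\epsilon}\,\mbox{Pr}[\mathcal{M}(d')\in T]+\delta=e^{\epsilon}\,\mbox{Pr}[F(\mathcal{M}(d'))\in\mathcal{S}]+\delta,
\]
which is exactly the $(\epsilon,\delta)$-DP inequality for $F\circ\mathcal{M}$. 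As $\mathcal{S}$ and the adjacent pair $d,d'$ were arbitrary, the deterministic case is settled.

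Next, to handle a randomized $F$, I would express it as a mixture of deterministic maps: a randomized post-processing can be viewed as first drawing a deterministic function $F_{\omega}$ according to its internal randomness $\omega$ (independent of $\mathcal{M}$) and then applying $F_{\omega}$. Conditioning on $\omega$, each $F_{\omega}\circ\mathcal{M}$ obeys the inequality just established; integrating both sides against the law of $\omega$ preserves the inequality, since both probability terms are linear in the mixing measure and the additive $\delta$ integrates back to $\delta$. This yields $(\epsilon,\delta)$-DP for the randomized composition as well.

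I expect the only genuine subtlety to lie in the randomized case and its measure-theoretic bookkeeping: ensuring $F^{-1}(\mathcal{S})$ is measurable, and justifying the interchange (by Fubini) of the expectation over $\omega$ with the probability over the randomness of $\mathcal{M}$. The deterministic core is essentially immediate once the preimage reformulation is in place; the real work is verifying that nothing breaks when $F$ carries its own independent randomness.
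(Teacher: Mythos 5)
The paper does not prove this theorem; it is quoted from Dwork and Roth and used as a black box. Your argument is correct and is essentially the standard proof from that reference: the deterministic case by pulling the output event back through $F^{-1}$ and applying the DP inequality to the preimage set, then the randomized case by writing $F$ as a mixture of deterministic maps over its internal randomness $\omega$ (independent of the data and of $\mathcal{M}$'s coins) and averaging the resulting inequalities, which preserves both the multiplicative $e^{\epsilon}$ factor and the additive $\delta$ by linearity. Your flagged caveats are the right ones, and neither causes trouble in the settings this paper cares about: measurability of $F^{-1}(\mathcal{S})$ is guaranteed once $F$ is assumed measurable (implicit in calling it a ``function'' on a probability space), and the mixture decomposition is immediate for discrete ranges and standard for Polish spaces. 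Nothing is missing.
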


\subsection{R{\'e}nyi differential privacy (RDP)}
\label{sec:RDP}
R{\'e}nyi differential privacy (RDP) extends ordinary DP using R{\'e}nyi's $\alpha$ divergence \citep{Renyi1961measures} and provides tighter and easier composition property than the ordinary DP notion. Formally, we recall
\begin{definition}[$(\alpha,\epsilon)$-RDP \citep{Mironov17}]
A randomised mechanism $\mathcal{M}$ is $(\alpha, \epsilon)$-RDP if for all adjacent inputs $D, D'$, R{\'e}nyi's $\alpha$-divergence (of order $\alpha > 1$) between the distribution of $\Mc(D,\texttt{AUX})$ and $\Mc(D',\texttt{AUX})$ satisfies:
\begin{align}
\Dds_{\alpha}(\Mc(D,\texttt{AUX}) \| \Mc(D',\texttt{AUX})) := \tfrac{1}{\alpha-1}\log \EE_{Z \sim Q} \left(\tfrac{P(Z)}{Q(Z)}\right)^\alpha \leq \epsilon,
\end{align}
where $P$ and $Q$ are the density of $\Mc(D,\texttt{AUX})$ and $\Mc(D',\texttt{AUX})$, respectively (w.r.t. some dominating measure $\mu$), and $\texttt{AUX}$ denotes auxiliary input (could be omitted if not applicable).
\end{definition}

Importantly, a mechanism satisfying $(\alpha,\epsilon)$-RDP also satisfies $(\epsilon+\frac{\log 1/\delta}{\alpha-1},\delta)$-DP for any $\delta\in (0,1)$.

Conveniently, RDP is linearly composable: 
\begin{theorem}[Composition of RDP \citep{Mironov17}]
\label{theory:compose}
If mechanism $\mathcal{M}_i$ satisfies $(\alpha,\epsilon_i)$-RDP  for $i=1,2,\ldots,k$, then releasing the composed mechanism $(\mathcal{M}_1,\ldots,\mathcal{M}_k)$ satisfies $(\alpha,\sum_{i=1}^k\epsilon_i)$-RDP.
\end{theorem}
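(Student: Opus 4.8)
The plan is to reduce to the two-fold case $k=2$ and then conclude by a straightforward induction on $k$; the inductive step treats the pair $\big((\Mc_1,\ldots,\Mc_{k-1}),\,\Mc_k\big)$ exactly as the base case, so it suffices to show that if $\Mc_1$ is $(\alpha,\epsilon_1)$-RDP and $\Mc_2$ is $(\alpha,\epsilon_2)$-RDP, then $(\Mc_1,\Mc_2)$ is $(\alpha,\epsilon_1+\epsilon_2)$-RDP. Fix adjacent inputs $D,D'$ and let $P$ and $Q$ denote the joint densities of $(\Mc_1(D),\Mc_2(D))$ and $(\Mc_1(D'),\Mc_2(D'))$ with respect to a common dominating measure. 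By the definition of R{\'e}nyi divergence, the goal is equivalent to showing $\EE_{Z\sim Q}(P(Z)/Q(Z))^\alpha = \int P^\alpha Q^{1-\alpha} \le \exp\!\big((\alpha-1)(\epsilon_1+\epsilon_2)\big)$.

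First I would factor the joint densities by the chain rule of probability. Writing the output as $z=(z_1,z_2)$, set $P(z_1,z_2)=P_1(z_1)\,P_2(z_2\mid z_1)$ and $Q(z_1,z_2)=Q_1(z_1)\,Q_2(z_2\mid z_1)$, where $P_1,Q_1$ are the marginals of $\Mc_1(D),\Mc_1(D')$ and $P_2(\cdot\mid z_1),Q_2(\cdot\mid z_1)$ are the conditional densities of $\Mc_2$ run on $D,D'$ with the realized first output $z_1$ supplied as auxiliary input. Substituting and separating variables via Fubini--Tonelli (legitimate since all integrands are nonnegative) gives
\begin{align}
\int P^\alpha Q^{1-\alpha} = \int P_1(z_1)^\alpha Q_1(z_1)^{1-\alpha}\left(\int P_2(z_2\mid z_1)^\alpha Q_2(z_2\mid z_1)^{1-\alpha}\,\rmd z_2\right)\rmd z_1.
\end{align}

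The crucial observation is that, for every fixed $z_1$, the inner integral equals $\exp\!\big((\alpha-1)\,\Dds_\alpha(\Mc_2(D,z_1)\,\|\,\Mc_2(D',z_1))\big)$, and since $\Mc_2$ is $(\alpha,\epsilon_2)$-RDP \emph{uniformly over all auxiliary inputs}, this is at most $\exp((\alpha-1)\epsilon_2)$ regardless of $z_1$. Pulling this uniform bound out of the outer integral and then using that $\Mc_1$ is $(\alpha,\epsilon_1)$-RDP, i.e.\ $\int P_1^\alpha Q_1^{1-\alpha}\le \exp((\alpha-1)\epsilon_1)$, yields $\int P^\alpha Q^{1-\alpha}\le \exp\!\big((\alpha-1)(\epsilon_1+\epsilon_2)\big)$; taking $\tfrac{1}{\alpha-1}\log$ establishes the claim.

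The step I expect to be the main obstacle---conceptual rather than computational---is justifying that the inner RDP bound for $\Mc_2$ holds uniformly in the conditioning value $z_1$. This is precisely what the auxiliary input $\texttt{AUX}$ in the RDP definition buys us: the guarantee $\Dds_\alpha(\Mc_2(D,\texttt{AUX})\|\Mc_2(D',\texttt{AUX}))\le\epsilon_2$ is required to hold for \emph{every} admissible $\texttt{AUX}$, so in particular it holds with $\texttt{AUX}=z_1$ for each realization $z_1$ of $\Mc_1$. Without this adaptive formulation the factorization would only control the divergence for one fixed auxiliary value, not after integrating over the random $z_1$, so the definition must be invoked carefully at exactly this point; the remaining measure-theoretic details (existence of the conditional densities, validity of the exponent manipulations given $\alpha>1$) are routine.
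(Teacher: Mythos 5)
Your proof is correct and is essentially the standard argument: the paper does not prove this theorem itself (it cites \citet{Mironov17}), but the chain-rule factorization and inner-integral identity you use are exactly the ``decomposition rule'' from the proof of Proposition 1 in \citet{Mironov17} that the paper invokes in its own appendix proof of \Cref{theory:non-private}. Your emphasis on the uniformity of the $\Mc_2$ bound over the auxiliary input $z_1$ is precisely the right place to be careful, and the reduction to $k=2$ plus induction is the standard route.
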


We remark that $(\alpha,\infty)$-RDP (or $(\infty, \delta)$-DP) is not a rigorous notion to denote non-private mechanisms, because $\epsilon$ only tracks the upper bound of privacy loss.
We call a model $\Mc$ non-private if for any $\epsilon > 0$, there exist adjacent inputs $D, D'$ such that 
\begin{align}
\label{eq:def-np}
\Dds_{\alpha}(\Mc(D,\texttt{AUX}) \| \Mc(D',\texttt{AUX}) ) > \epsilon.
\end{align}
First, composing a fine-tuning mechanism (either DP or not) with a non-private model remains non-private.
Formally, we have:
\begin{restatable}{theorem}{nonprivate}
\label{theory:non-private}
Let $\Mc_{p}(D)$ denote a non-private (pre-trained) model, $\Fc$ denote any (fine-tuning) mechanism (differentially private or not).
Then, the composition $\Mc_{r}(D):= (\Mc_p(D), \Fc(D, \Mc_p(D)))$ remains non-private. 
\end{restatable}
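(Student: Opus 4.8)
The plan is to unfold the definition of "non-private" from the excerpt and show that composing with $\mathcal{F}$ cannot destroy the distinguishability witnessed by $\mathcal{M}_p$. Recall that $\mathcal{M}_p$ being non-private means: for every $\epsilon > 0$ there exist adjacent inputs $D, D'$ such that the Rényi divergence $\Dds_{\alpha}(\mathcal{M}_p(D) \| \mathcal{M}_p(D')) > \epsilon$. The goal is to establish the same unbounded-divergence property for the \emph{marginal} first coordinate of $\mathcal{M}_r$, and then argue that the divergence of the joint mechanism $(\mathcal{M}_p, \mathcal{F})$ can only be larger.

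First I would fix an arbitrary $\epsilon > 0$ and invoke non-privateness of $\mathcal{M}_p$ to obtain adjacent $D, D'$ witnessing $\Dds_{\alpha}(\mathcal{M}_p(D) \| \mathcal{M}_p(D')) > \epsilon$. The crux is a \emph{monotonicity} (data-processing) property of Rényi divergence: projecting the joint output $(\mathcal{M}_p(D), \mathcal{F}(D,\mathcal{M}_p(D)))$ onto its first coordinate is a (deterministic) post-processing map, and Rényi divergence does not increase under such maps. Hence
\begin{align}
\label{eq:dpi}
\Dds_{\alpha}\bigl(\mathcal{M}_r(D) \,\big\|\, \mathcal{M}_r(D')\bigr) \;\geq\; \Dds_{\alpha}\bigl(\mathcal{M}_p(D) \,\big\|\, \mathcal{M}_p(D')\bigr) \;>\; \epsilon.
\end{align}
Since $\epsilon > 0$ was arbitrary and the witnessing pair $D, D'$ depends on it, this is precisely the definition \eqref{eq:def-np} of $\mathcal{M}_r$ being non-private, which is what we want to prove.

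The main obstacle, and the step that needs the most care, is justifying the data-processing inequality \eqref{eq:dpi} in the presence of the \emph{dependence} of $\mathcal{F}$ on the first coordinate $\mathcal{M}_p(D)$. The projection to the first marginal is clean, but one must check that the joint distribution of $\mathcal{M}_r(D)$ genuinely has $\mathcal{M}_p(D)$ as its first marginal — true by construction, since $\mathcal{F}$ is applied \emph{after} and conditioned on $\mathcal{M}_p(D)$, so the pushforward under coordinate projection recovers the law of $\mathcal{M}_p(D)$ exactly. The standard Rényi data-processing inequality then applies to the marginalization channel regardless of how $\mathcal{F}$ couples to the first output. A subtlety worth flagging is the case $\Dds_{\alpha}(\mathcal{M}_p(D)\|\mathcal{M}_p(D')) = +\infty$ (which the $>\epsilon$-for-all-$\epsilon$ condition permits): the inequality and the conclusion both hold trivially in that regime.

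As an alternative that sidesteps invoking the data-processing inequality as a black box, I would note that one can also argue directly by a likelihood-ratio estimate: any measurable event $S$ distinguishing $\mathcal{M}_p(D)$ from $\mathcal{M}_p(D')$ lifts to the cylinder event $S \times \mathcal{R}_{\mathcal{F}}$ in the joint output space, whose probabilities under $\mathcal{M}_r(D)$ and $\mathcal{M}_r(D')$ equal those of $S$ under the respective marginals. This reduces the joint distinguishability to the marginal one and yields the same contradiction with any purported finite $\epsilon$ bound. I would present the data-processing version as the main proof for brevity and mention the event-lifting argument as intuition.
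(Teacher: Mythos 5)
Your proof is correct and follows essentially the same route as the paper: fix an arbitrary $\epsilon$, take the witnessing adjacent pair for $\Mc_p$, and show that the R{\'e}nyi divergence of the joint mechanism dominates that of its first marginal. The only difference is how that domination is justified --- the paper expands the joint divergence explicitly via the conditional decomposition identity from the proof of Proposition 1 in \cite{Mironov17} and lower-bounds the inner factor by $1$, whereas you invoke the data-processing inequality for the coordinate projection; both are valid and yield the same bound.
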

See \Cref{sec:proof} for the proof. \Cref{theory:non-private} formalizes the intuition that releasing \emph{more} information (as in the composed mechanism $\Mc_r$, which releases both the output of $\Fc$ and $\Mc_p$) can only make one's mechanism \emph{less} (differentially) private, while \Cref{theory:compose} shows that it degrades the privacy guarantee at most linearly.

However, what we are actually interested in is whether (DP) fine-tuning a non-private model violates DP guarantee, i.e.~releasing $\Fc(D,\Mc_p(D))$ instead of releasing output of both $\Fc$ and $\Mc_p$ in the composition is DP or not. Here we remark that DP fine-tuning a non-private model may still be non-private:
\begin{remark}
\label{rmk:non-private}
Let $\Fc$ denote a DP mechanism, and $\Mc_p(D)$ denote a non-private model. $\Fc(D,\Mc_p(D))$ can still be non-private.
\end{remark}
We can construct two examples to illustrate the remark. 
\begin{itemize}[leftmargin=*]
    \item Consider a DP $\Fc$ as $\Fc(D,\texttt{AUX})=c$, where $c$ is a constant. Then $\Fc(D, \Mc_p(D))=c$ is  private.
    \item Consider a DP $\Fc$ as $\Fc(D,\texttt{AUX})=\texttt{AUX}$. Then $\Fc(D, \Mc_p(D))=\Mc_p(D)$ is non-private.
\end{itemize}

Despite the pessimistic result in \Cref{rmk:non-private}, our method, which also utilizes the idea of pre-training on private input, can \emph{circumvent} privacy leakage by explicitly decomposing pre-trained model into two halves, which will be explained in the end of \Cref{sec:method}.

In our work, we also adopt the Gaussian mechanism for achieving RDP:
\begin{definition}[Gaussian mechanism for RDP \citep{DworkR14,Mironov17}]
\label{theory:gaussian}
Let $f:\Dc \to \RR^p$ be an arbitrary $p$-dimensional function with sensitivity:
\begin{align}
    \Delta_2 f = \max_{D, D'}\|f(D)-f(D')\|_2
\end{align}
for all adjacent datasets $D, D'\in\Dc$. The Gaussian mechanism $\Mc_\sigma$ perturb the output of $f$ with Gaussian noise:
\begin{align}
\Mc_\sigma = f(D) + \mathcal{N}(0, \sigma^2\cdot \mathbb{I})
\end{align}
where $\mathbb{I}$ is identity matrix. Then, $\Mc_\sigma$ satisfies $(\alpha,\frac{\alpha(\Delta_2 f)^2}{2\sigma^2})$-RDP.
\end{definition}

\subsection{Variational autoencoder (VAE)}
Let $x\in \mathcal{X}$ denote data and $z\in\mathcal{Z}$ denote latent variable. VAE consists of two components: an encoder $T_\theta:\mathcal{X}\rightarrow\mathcal{Z}$, where $z=T_\theta(x)\sim q_\theta(z|x)$ ($q_\theta(z|x)$ is known as a variational inference to approximate the intractable true posterior $p(z|x)$), and a decoder $S_\phi:\mathcal{Z}\rightarrow\mathcal{X}$, where $x=S_\phi(z)\sim p_\phi(x|z)$ . Given a tractable prior $p(z)$, e.g. Gaussian, we can rewrite $\log p(x)$ as:
\begin{align}
    \log p(x) &= \mathbb{E}_{q_\theta(z|x)}[\log p_\phi(x|z)] - \mathbb{D}_{KL}(q_\theta(z|x)||p(z)) + \mathbb{D}_{KL}(q_\theta(z|x)||p(z|x)) \\
    &\geq \mathbb{E}_{q_\theta(z|x)}[\log p_\phi(x|z)] - \mathbb{D}_{KL}(q_\theta(z|x)||p(z)) := \textrm{Evidence lower bound (ELBO)}
\end{align}

where the inequality holds due to the non-negativity of Kullback–Leibler (KL) divergence. Therefore, the training of VAE proceeds by maximizing $\log p(x)$ via maximizing the tractable ELBO. A simple extension to conditional generation is to encode label information into the input.

\section{Method: DP\textsuperscript{2}-VAE}
\label{sec:method}
Our idea is inspired by GS-WGAN, where the authors warm-start (i.e.~pre-train) discriminators along with a non-private generator to bootstrap the training process, and then privately train the generator while continuing normally training the pretrained discriminators, to retain differential privacy for the generator. The rationale behind it is the fact that only the generator will be released after completing the training of a GAN, so the discriminator can be non-private. We adapt this idea to VAE, where only the decoder will be released, thus it is not necessary to make the encoder private. Prior works show that subsampling can improve privacy \citep{WangBK19,BalleBG18}, so we subsample the whole training set into different subsets. Our method contains two stages.  Each encoder is pretrained with a new decoder on each subset in stage 1. Proceeding to stage 2, we first reinitialize the decoder. In each training iteration, we randomly query a pre-trained encoder and its associated subsampled dataset, then privately train the decoder and normally train the encoder. 
Specifically, a dataset $D$ is randomly shuffled and subsampled (without replacement) into subsets $D_k$ (for $k=1,2,\ldots,K$, we use $K=2500$ in this work), then the training of DP\textsuperscript{2}-VAE can be summarized into two main stages:
\begin{itemize}[leftmargin=*]
    \item {\bf Stage 1:} we reinitialize a decoder $S_\phi$, then pre-train both encoder $T_{\theta_k}$  and decoder $S_\phi$ on $D_k$, and save $T_{\theta_k}$ at the end of pre-training (for $k=1,2,\ldots,K$). 
    \item {\bf Stage 2:} we reinitialize a decoder $S_\phi$. In each training iteration, we randomly query a pre-trained encoder $T_{\theta_i}$ and associated $D_i$, then update parameters of $S_\phi$ and $T_{\theta_i}$ by private and normal training algorithms, respectively.
\end{itemize}
\subsection{Stage 1: Pre-training encoders on private input}
Stage 1 is similar to normally training a conditional VAE with gradient clipping. Differently, we partition the dataset into subsets, and each encoder is pre-trained on a subset with a reinitialized decoder. The detailed algorithm is given in \Cref{algo:s1}. 
The weights of pre-trained encoder in stage 1 will be transferred as input to stage 2. Note that encoders are independent of each other, so the pre-training can be conducted in parallel.

\begin{algorithm}[t]
\DontPrintSemicolon
\KwIn{Private training set $D=\{(X,y)\in \mathcal{X}\times\mathcal{Y}\}^N$, ELBO of conditional VAE $\mathcal{L}(X,y;\cdot,\cdot)$, the number of pre-training iterations $T_p$, the number of encoders $K$, learning rate $\eta_p$, gradient clipping bound $C$, batch size $B$, Adam optimizer $Adam(\cdot;\texttt{aux})$}

Subsample $D$ into $K$ subsets $D_1,D_2,\ldots,D_K$

\For{$k \gets 1$ \textbf{to} $K$}{
    Randomly initialize $\theta_k,\phi$ 
    \tcp*{Initialize encoder and decoder}
    \For{$t \gets 1$ \textbf{to} $T_p$}{
        Sample a batch $(X_b,y_b)=\{(X_i,y_i)\}_{i=1}^B\subseteq D_k$
        
        $g_\phi(X_b,y_b)=\nabla_\phi\mathcal{L}(X_b,y_b;\theta_k,\phi)$
        \tcp*{Compute gradient of decoder}
        
        $g_\phi(X_b,y_b)=g_\phi(X_b,y_b)/\max(1,\frac{\|g_\phi(X_b,y_b)\|_2}{C})$
        \tcp*{Clip the gradient}
        
        $g_{\theta_k}(X_b,y_b)=\nabla_{\theta_k}\mathcal{L}(X_b,y_b;\theta_k,\phi)$
        \tcp*{Compute gradient of encoder}
        
        $g_{\theta_k}(X_b,y_b)=g_{\theta_k}(X_b,y_b)/\max(1,\frac{\|g_{\theta_k}(X_b,y_b)\|_2}{C})$
        \tcp*{Clip the gradient}
        
        $\phi=\phi-\eta_p*Adam(g_\phi(X_b,y_b);\texttt{aux})$
        \tcp*{Update parameters of decoder}
        
        $\theta_k=\theta_k-\eta_p*Adam(g_{\theta_k}(X_b,y_b);\texttt{aux})$
        \tcp*{Update parameters of encoder}
    }
}
\KwOut{$\theta_1,\theta_2,\ldots,\theta_K$} 
\caption{DP\textsuperscript{2}-VAE: Stage 1}
\label{algo:s1}
\end{algorithm}
\begin{algorithm}[t]
\DontPrintSemicolon
\KwIn{Private training set $D=\{(X,y)\in \mathcal{X}\times\mathcal{Y}\}^N$, ELBO of conditional VAE $\mathcal{L}(X,y;\cdot,\cdot)$, number of training iterations $T$, learning rate $\eta$, batch size $B$, noise multiplier $\sigma$, gradient clipping bound $C$, the number of pre-trained encoders $K$, pre-trained encoder $\theta_k$ ($k=1,2,\ldots,K$), Adam optimizer $Adam(\cdot;\texttt{aux})$, DP constraint  $\delta=10^{-5}$, DP constraint epsilon calculator based on RDP accountant $Eps$}

Subsample $D$ into $K$ subsets $D_1,D_2,\ldots,D_K$  \tcp*{The same as in stage 1}
Randomly initialize $\phi$ 
\tcp*{Initialize decoder $S_\phi$}

\For{$k \gets 1$ \textbf{to} $K$}{
    Load pretrained encoder $\theta_k$
}
\For{$t \gets 1$ \textbf{to} $T$}{
    Randomly query an index $k\sim\texttt{Unif}(1,K)$ 
    
    Sample a random batch $\{(X_i,y_i)\}_{i=1}^B$ from $D_k$

    \For{$i \gets 1$ \textbf{to} $B$}{
        $g_\phi(X_i,y_i)=\nabla_\phi\mathcal{L}(X_i,y_i;\theta_k,\phi)$
        \tcp*{Compute gradient of decoder}

        $g_{\theta_k}(X_i,y_i)=\nabla_{\theta_k}\mathcal{L}(X_i,y_i;\theta_k,\phi)$
        \tcp*{Compute gradient of encoder}

        $\theta_k=\theta_k-\eta*Adam(g_{\theta_k}(X_i,y_i);\texttt{aux})$
        \tcp*{Update parameters of encoder}
    }

    $\bar{g}_\phi=\frac1B\sum_{i=1}^B g_\phi(X_i,y_i)$
    
    $\bar{g}_\phi = \bar{g}_\phi \cdot \min\left(1,\frac{C}{\|\bar{g}_\phi\|_2}\right)$
    \tcp*{Clip the gradient}
    
    $\tilde{g}_\phi = \bar{g}_\phi + \mathcal{N}(0, \sigma^2C^2\mathbb{I}))$

    $\phi=\phi-\eta*Adam(\tilde{g}_\phi;\texttt{aux})$
    \tcp*{Update parameters of decoder}
    
}

$\epsilon=Eps(K,\sigma,T,\delta)$
\tcp*{Calculate DP constraint $\epsilon$}

\KwOut{$\phi,\epsilon$}
\caption{DP\textsuperscript{2}-VAE: Stage 2}
\label{algo:s2}
\end{algorithm}

\subsection{Stage 2: Privately training the decoder with pre-trained encoder}
In stage 2, we load pre-trained encoders $T_{\theta_k}$ ($k=1,2,\ldots,K$) obtained in stage 1. In each training iteration, we randomly query one encoder and its associated training subset, then update decoder and encoder on the subset by private and non-private training algorithms, respectively, as described in \Cref{algo:s2}.

We note that an alternative to stage 2 is to fix the pre-trained encoder. However, we empirically found that keep training the pre-trained encoder outperforms the aforementioned alternative, thus we adopt the strategy as described in this subsection in our work.

\begin{restatable}{theorem}{main}
\label{theory:main}
Each update step in the decoder $S_\phi$ in stage 2 satisfies $(\alpha, \frac{2\alpha}{\sigma^2})$-RDP.
\end{restatable}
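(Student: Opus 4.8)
The plan is to observe that, within one Stage-2 decoder update, the only data-dependent quantity that is ever exposed is the noised gradient $\tilde{g}_\phi = \bar{g}_\phi + \mathcal{N}(0, \sigma^2 C^2 \mathbb{I})$; the subsequent parameter update $\phi \leftarrow \phi - \eta \cdot Adam(\tilde{g}_\phi; \texttt{aux})$ is a deterministic function of $\tilde{g}_\phi$ given the optimizer state. Hence it suffices to bound the RDP of the map $D \mapsto \tilde{g}_\phi$ and then invoke post-processing. The encoder update $\theta_k$ is never published (only the decoder is released), so for this per-step statement it is folded into the auxiliary input $\texttt{AUX}$; the broader question of why pre-training the encoder on private data does not leak is handled separately by the two-halves decomposition at the end of \Cref{sec:method}.

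First I would cast the decoder update as an instance of the Gaussian mechanism of \Cref{theory:gaussian}, with underlying function $f(D) = \bar{g}_\phi$ equal to the clipped batch-averaged decoder gradient and with noise standard deviation $\sigma C$ (since the injected noise is $\mathcal{N}(0, \sigma^2 C^2 \mathbb{I})$). The crucial step is the sensitivity. Because the clipping $\bar{g}_\phi \leftarrow \bar{g}_\phi \cdot \min(1, C / \|\bar{g}_\phi\|_2)$ is applied to the average, every output lies in the ball of radius $C$, i.e.~$\|f(D)\|_2 \leq C$ for all inputs, so for any adjacent $D, D'$ the triangle inequality gives
\begin{align}
\Delta_2 f = \max_{D,D'} \|f(D) - f(D')\|_2 \leq \|f(D)\|_2 + \|f(D')\|_2 \leq 2C.
\end{align}

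Plugging $\Delta_2 f = 2C$ and noise scale $\sigma C$ into \Cref{theory:gaussian}, releasing $\tilde{g}_\phi$ is $(\alpha, \epsilon')$-RDP with
\begin{align}
\epsilon' = \frac{\alpha (\Delta_2 f)^2}{2 (\sigma C)^2} = \frac{\alpha (2C)^2}{2 \sigma^2 C^2} = \frac{2\alpha}{\sigma^2},
\end{align}
the factors of $C$ canceling. Finally, because the Adam update of $\phi$ is a post-processing of $\tilde{g}_\phi$, the data-processing inequality for R\'enyi divergence (the RDP analogue of \Cref{theory:post}) preserves this bound, so the entire update step is $(\alpha, \tfrac{2\alpha}{\sigma^2})$-RDP as claimed.

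The main subtlety to get right is the sensitivity: since clipping acts on the \emph{average} rather than per sample, one must resist inserting a $1/B$ factor --- the worst-case gap between two clipped averages is genuinely $2C$, independent of the batch size and of whether adjacency means add/remove or replace, which is exactly what produces the constant $2$ in $\tfrac{2\alpha}{\sigma^2}$. I would also note that the amplification available from subsampling the index $k$ and the batch is deliberately not exploited here; this per-step guarantee is intentionally the unamplified Gaussian-mechanism bound, with subsampling amplification and the $T$-fold composition deferred to the accountant $Eps(K, \sigma, T, \delta)$.
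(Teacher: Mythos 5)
Your proposal is correct and follows essentially the same route as the paper's own proof: bound the sensitivity of the clipped batch-averaged decoder gradient by $2C$ via the triangle inequality, apply the Gaussian mechanism of \Cref{theory:gaussian} with noise scale $\sigma C$ to get $\frac{\alpha(2C)^2}{2(\sigma C)^2}=\frac{2\alpha}{\sigma^2}$, and treat the Adam update as post-processing. Your added remarks (no $1/B$ factor since clipping acts on the average, and subsampling amplification being deferred to the accountant) are sensible clarifications but do not change the argument.
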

We defer the proof to \Cref{sec:proof}.

While \Cref{rmk:non-private} reveals that privacy cannot be reliably protected by trivially DP fine-tuning a non-private pre-trained model, it does not apply to DP\textsuperscript{2}-VAE, even though we utilize a similar idea of pre-training on private data. The reason lies in the fact that we explicitly decompose the VAE into two halves, where only the pre-trained encoders are loaded. At the beginning of stage 2, the decoder $S_\phi$ is randomly initialized, which eliminates private information in the pre-trained decoder, i.e.~$\phi^{(0)}$ is $(\alpha,0)$-RDP. By \Cref{theory:main}, each decoder update step is DP, thus the released decoder $S_\phi$ as a composition of DP mechanisms is differentially private. We provide a more intuitive interpretation to further illustrate this point: in stage 2, we first randomly initialize the decoder, then perturb the gradient with Gaussian noise when training the decoder, such that the information flow in the decoder is always privatized. The schematic of DP\textsuperscript{2}-VAE is given in \Cref{sec:schematic}.

\section{Experiments}

In this section, we evaluate and compare DP\textsuperscript{2}-VAE against SoTA baselines on MNIST\citep{LecunBBH98}. Implementation details are given in \Cref{sec:implement}.
\subsection{Experimental setup}
\paragraph{Evaluation tasks \& metrics:} Since privacy-utility trade-off is the main concern in DP learners, we consider the following two tasks for extensive quantitative evaluations given the same set of privacy parameters (i.e.~same $(\epsilon, \delta)$-DP) via 60k generated images:
\begin{itemize}[leftmargin=*]
    \item Generation quality, which is measured by Fr{\'e}chet Inception Distance (FID) \citep{HeuselRUNH17}. 
    \item Model utility. We train three different classifiers, e.g. logistic regression (LR), multi-layer perceptron (MLP), and convolutional neural network (CNN), on generated images, then test the classifier on real images, where the performance is measured by classification accuracy. We take 5 runs and report the average.
    
\end{itemize}

\paragraph{SoTA baselines:} Our method is compared with following baseline methods that are also developed on image datasets, i.e.~DP-CGAN \citep{TorkzadehmahaniKP19}, DP-MERF \citep{HarderAP21}, Datalens \citep{WangWLRZL21}, PATE-GAN \citep{JordonYV19}, G-PATE \citep{LongWYKZGL21}, GS-WGAN \citep{ChenOF20}, DP-Sinkhorn \citep{CaoBVFK21}. For more details, we refer interesting readers to \Cref{sec:related_work} and respective references.

\subsection{Comparison with SoTA baselines}
The qualitative visualization comparison is shown in \Cref{fig:qualitative_mnist_eps10}, and quantitative comparison is given in \Cref{tbl:mnist_fmnist_eps10}. Quantitatively, \Cref{tbl:mnist_fmnist_eps10} indicates that our method achieves comparable performance in classification accuracy.

\begin{figure}
\centering
    \includegraphics[width=0.4\linewidth]{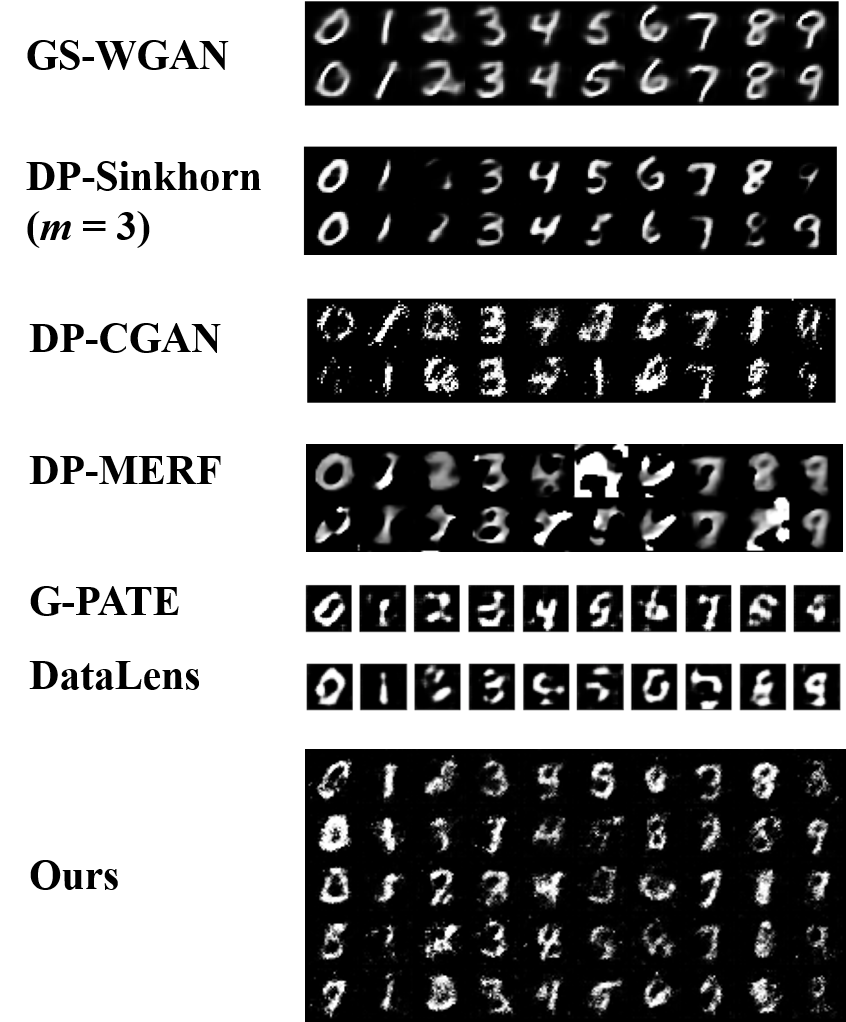}
    \caption{Qualitative comparison on MNIST and Fashion MNIST under $(10, 10^{-5})$-DP. Images of DP-CGAN, GS-WGAN, DP-Sinkhorn are cited from \citep{CaoBVFK21}. Images of G-PATE and DataLens are cited from their papers, respectively.}
    \label{fig:qualitative_mnist_eps10}
    \vspace{-1em}
\end{figure}

\begin{table*}[t]
\caption{Quantitative comparison on MNIST given $(10, 10^{-5})$-DP. Acc denotes classification accuracy, which is shown in \%. $\uparrow$ and $\downarrow$ refer to higher is better or lower is better, respectively. Results of DP-CGAN, GS-WGAN, DP-Sinkhorn are cited from \citep{CaoBVFK21}. Results of G-PATE and DataLens are cited from their papers, respectively.}
\centering
\label{tbl:mnist_fmnist_eps10}
\scalebox{0.95}{
\begin{tabular}{llcccc}
\toprule \multirow{2}{*}{Method}  & \multirow{2}{*}{$\epsilon$} &
\multicolumn{4}{c}{MNIST} \\
\cmidrule(lr){3-6}
          & &  FID $\downarrow$ & LR & MLP & CNN \\
          & &    & Acc $\uparrow$ & Acc $\uparrow$ & Acc $\uparrow$ \\
\midrule
Real data & $\infty$ & 1.6 & 92.2 & 97.5 & 99.3 \\
\midrule
DP-CGAN & 10 & 179.2 & 60 & 60 & 63 \\
DP-MERF & 10 & 121.4 & 79.1 & 81.1 & 82.0 \\
G-PATE & 10 & 150.6 & N/A & N/A & 80.9 \\
DataLens & 10 & 173.5 & N/A & N/A & 80.7 \\
GS-WGAN & 10 & 61.3 & 79 & 79 & 80 \\
DP-Sinkhorn ($m=1$) & 10 & 61.2 & 79.5 & 80.2 & 83.2\\
DP-Sinkhorn ($m=3$) & 10 &  55.6 & 79.1 & 79.2 & 79.1\\
\midrule
Ours & 10 & 134.3 &  78.4 & 77.8 & 81.2\\
\bottomrule
\end{tabular}
}
\end{table*}

\section{Related work} 
\label{sec:related_work}
We group related work by different categories of generative models:
\paragraph{GAN:} The vast majority of related works are based on GAN. DP-GAN \citep{XieLWWZ18} first trains GAN with DP-SGD algorithm, where the discriminator is trained with DP-SGD, then the generator is automatically DP as ensured by post-processing theorem. DP-CGAN \citep{TorkzadehmahaniKP19} extends DP-GAN into conditional generative setting. Private Aggregation of Teacher Ensembles (PATE) \citep{PapernotAEGT17,PapernotSMRTE18} is a different mechanism for learning a DP model, and a few related works tried to apply PATE to GAN. PATE-GAN \citep{JordonYV19} trains $k$ teacher discriminators on $k$ disjoint partitioned datasets, and the label is predicted by aggregating teacher votes that are perturbed with Laplace noise, so that the discriminator is DP. The PATE mechanism makes the discriminator non-differentiable, thus a student discriminator is trained with teacher ensembles, which can be used to train the generator. G-PATE \citep{LongWYKZGL21} is another work extending GAN with PATE. The authors first observed that instead of learning a DP discriminator, it suffices to ensure the information flow from the discriminator to the generator is private to make the generator DP, i.e.~sanitizing the aggregated gradients from teacher discriminators to the generator. However, gradient vectors need to be discretized in each dimension to employ the PATE mechanism that only takes categorical data as input. DataLens \citep{WangWLRZL21} further improves G-PATE by introducing a three-step gradient compression and aggregation algorithm called TopAgg. GS-WGAN \citep{ChenOF20} explores the gradient sanitization idea from G-PATE, and applies it to training Wasserstein GAN (WGAN) with DP-SGD algorithm, so that no discretization is required.
\paragraph{VAE:}  DP-VaeGM \citep{ChenXXLBKZ18} trains $k$ VAEs on $k$ classes of private data with DP-SGD algorithm, and return the union as generation. This work only evaluates their model against various privacy attacks. DP-kVAE \citep{AcsMCD18} first partitions the dataset into $k$ clusters by differentially private kernel $k$-means method, then trains $k$ VAEs on each data cluster with DP-SGD. PrivVAE designs a term-wise DP-SGD that restricts the gradient sensitivity at $O(1)$, because the authors observe that when additional divergence is added to the training objective of VAE as a regularization term, the gradient sensitivity will increase from $O(1)$ to $O(B)$ (where $B$ is the batch size), which is not applicable to our work since we use vanilla (conditional) VAE. It is worth mentioning that both DP-VaeGM and DP-kVAE essentially directly training VAE with DP-SGD algorithm, so we think there is potential to improve DP-VAEs. 

\paragraph{Others:} DP-NF \citep{WaitesCummings21} directly trains a flow-based model by DP-SGD algorithm. DP-MERF \citep{HarderAP21} proposes to perturb embeddings (random Fourier features) of input with Gaussian noise, then training a generator by minimizing the maximum mean discrepancy (MMD) between the noisy embedding of private input and embedding of generation. DP-Sinkhorn \citep{CaoBVFK21} proposes to train a DP generator by minimizing the optimal transport distance between real and generated distribution with DP-SGD algorithm. There are also some DPGMs developed from graphical models, such as PrivBayes \citep{ZhangCPSX17} and PrivSyn \citep{ZhangWLHBHCZ21}, where the idea is to use a selected set of low-degree marginals to represent a dataset (mainly low dimensional dataset such as tabular datasets), then synthesizing data from noise-perturbed marginals. However, it cannot scale well on high dimensional image datasets because the number of marginals will exponentially increase to sufficiently represent an image dataset.

\section{Conclusion}
In this paper, we propose DP\textsuperscript{2}-VAE, a novel mechanism for training a differentially private (conditional) VAE on high-dimensional data. By exploring the insight that only the decoder of a VAE will be published, both pretraining encoders in stage 1 and training encoders in stage 2 can be non-private, while we only need to privately train the decoder, such that the noise perturbation in the private training is minimized. DP\textsuperscript{2}-VAE can be readily extended to other variants of VAE, which is expected to benefit practical deployment. We demonstrate the effectiveness of DP\textsuperscript{2}-VAE by comparing with a wide range of SoTA baselines.

\section*{Acknowledgement}
We thank Kiarash Shaloudegi and Saber Malekmohammadi for early discussion. It is worth mentioning that the privacy analysis in the prior version\footnote{\url{https://arxiv.org/abs/2208.03409v1}} was incorrect. We thank Alex Bie and Aaron Roth for pointing out this issue.

\clearpage

\printbibliography[title={References}]

\clearpage
\appendix
\section{Proof}
\label{sec:proof}
\nonprivate*
\begin{proof}
We adopt RDP in this proof. 
Let $\Mc_{p}:\Dc\to\Rc_1$ be a non-private (pre-trained) model (see \eqref{eq:def-np}), 
and $\Fc:\Dc \times\Rc_1\to\Rc_2$ be any mechanism.
We show that the composed mechanism 
$\Mc_{r}: \Dc \to \Rc_1\times\Rc_2, D\mapsto \big(\Mc_p(D), \Fc(D, \Mc_{p}(D))\big)$
remains non-private. 

Indeed, fix any $\epsilon > 0$ and choose $D, D'$ such that 
\begin{align}
\Dds_\alpha( \Mc_p(D) \| \Mc_p(D') ) > \epsilon,
\end{align}
which is possible due to $\Mc_p$ being non-private.

Let $P$ and $Q$ denote the density of $\Mc_p(D)$ and $\Mc_p(D')$, respectively. 
Applying the decomposition rule (see the proof of Proposition 1 in \cite{Mironov17}), we obtain 
\begin{align}
\Dds_\alpha\big(\Mc_{r}(D) \|\Mc_{r}(D') \big) 
&= \tfrac{1}{\alpha-1} \log \EE_{Z \sim Q} \bigg[ \Big(\tfrac{P(Z)}{Q(Z)}\Big)^{\alpha} \underbrace{\exp\Big( (\alpha-1) \Dds_\alpha \big(\Fc(D, Z) \|\Fc(D', Z) \big) \Big)}_{\geq 1} \bigg] 
\\
&\geq \tfrac{1}{\alpha-1} \log \EE_{Z \sim Q} \left[ \Big(\tfrac{P(Z)}{Q(Z)}\Big)^{\alpha} \right] 
= \Dds_\alpha( \Mc_p(D) \| \Mc_p(D') ) > \epsilon,
\end{align}
where the inequality follows from the fact that R{\'e}nyi's $\alpha$-divergence is always nonnegative (when $\alpha > 1$). 

Since $\epsilon$ is arbitrary, we have proved that $\Mc_r$ remains non-private.
\end{proof}

\main*
\begin{proof}
Let $g_\phi$ be the gradient function of decoder $S_\phi$. Consider two adjacent batches $D,D'$ of size $B$.
Since $\bar{g_\phi}(D)=\frac1B\sum_{i=1}^B g_\phi(X_i,y_i)$ and $\|\bar{g_\phi}(D)\|_2 \leq C$, we know:
\begin{align}
    \Delta_2 \bar{g_\phi} = \max_{D,D'}\|\bar{g_\phi}(D)-\bar{g_\phi}(D')\|_2  \leq  2C.
\end{align}
As $\tilde{g}_\phi = \bar{g}_\phi + \mathcal{N}(0, \sigma^2C^2\mathbb{I}))$ in line 17, by Gaussian mechanism (\Cref{theory:gaussian}) we know each update step in the decoder (releasing $\tilde{g_\phi}$) satisfies $(\alpha, \frac{\alpha(\Delta_2 \bar{g_\phi})^2}{2(\sigma C)^2})$-RDP, i.e.~$(\alpha, \frac{2\alpha}{\sigma^2})$-RDP.
\end{proof}

\section{Implementation}
\label{sec:implement}
\subsection{Architecture \& hyperparameters} Our conditional VAE code is adapted from a \href{https://github.com/AntixK/PyTorch-VAE/blob/master/models/cvae.py}{public repo}, where the architecture sequentially contains input layer, encoder, linear layers (for mean and variance, respectively),  decoder input layer, decoder, and a final layer. The variation mainly lies in the number of hidden units and the number of convolutional layers in both encoder and decoder, as well as the number of latent dimensions. For MNIST and Fashion MNIST, we use two convolutional layers with 512 and 256 hidden units in both encoder and decoder, along with 8 latent dimensions. For CelebA, we use three convolutional layers with 512, 256 and 128 hidden units in both encode and decoder, along with 16 latent dimensions.

\subsection{Privacy implementation} We use a public repo, i.e.~\href{https://github.com/ChrisWaites/pyvacy}{pyvacy}, for implementing DP training algorithm and epsilon calculation. Pyvacy tracks the privacy loss by RDP accountant, which is a PyTorch implementation based on \href{https://github.com/tensorflow/privacy}{Tensorflow Privacy}.

\subsection{Fr{\'e}chet Inception Distance (FID)} FID calculates the distance between the feature vectors extracted by InceptionV3 pool3 layer \citep{SzegedyVISW16} on real and synthetic samples. Specifically,
\begin{align}
    \textrm{FID} = \|\mu_r-\mu_g\|_2^2 + \textrm{Tr}(\Sigma_r+\Sigma_g-2(\Sigma_r\Sigma_g)^{\frac12})
\end{align}
where $X_r\sim \mathcal{N}(\mu_r,\Sigma_r)$ and $X_g \sim \mathcal{N}(\mu_g,\Sigma_g)$ are activations of InceptionV3 pool3 layer of real images and generated images, respectively, and Tr($A$) refers to the trace of a matrix $A$. Intuitively, a lower FID means the generation $X_g$ is more realistic (or more similar to $X_r$). We use a  \href{https://github.com/mseitzer/pytorch-fid}{PyTorch implementation} for computing FID, which will resize images and repeat channels three times for grayscale images to meet the input size requirement.  

\subsection{Classification task} We follow \citet{CaoBVFK21} for the classifier implementation. We import scikit-learn package for implementation logistic regression classifier (e.g. from sklearn.linear\_model import LogisticRegression) with default parameter settings. 

The MLP network consists of following layers: linear($input\_dim, 100$) $\rightarrow$ ReLU $\rightarrow$ linear(100, $output\_dim$) $\rightarrow$ Softmax. 

The CNN consists of following layers: Conv2d($input\_channels$, 32, kernel\_size=3, stride = 2, padding=1) $\rightarrow$ Dropout(p=0.5) $\rightarrow$ ReLU $\rightarrow$ Conv2d(32, 64, kernel\_size=3, stride = 2, padding=1) $\rightarrow$ Dropout(p=0.5) $\rightarrow$ ReLU $\rightarrow$ flatten $\rightarrow$ linear($flatten\_dim,output\_dim$) $\rightarrow$ Softmax.

Both MLP and CNN are optimized by Adam with default parameters. All classifiers are trained on synthetic data, and we report test accuracy on real test data as the evaluation metric.

\section{Framework}
\label{sec:schematic}
The schematic of DP\textsuperscript{2}-VAE is depicted in \Cref{fig:framework}. In stage 1, we normally pre-train each encoder with a reinitialized decoder. In stage 2, we only transfer the pre-trained weights of encoders, and reinitialize the decoder in the beginning, then we train the decoder from scratch by private training algorithm while normally updating encoders.

\begin{figure}
    \centering
    \includegraphics[width=0.7\linewidth]{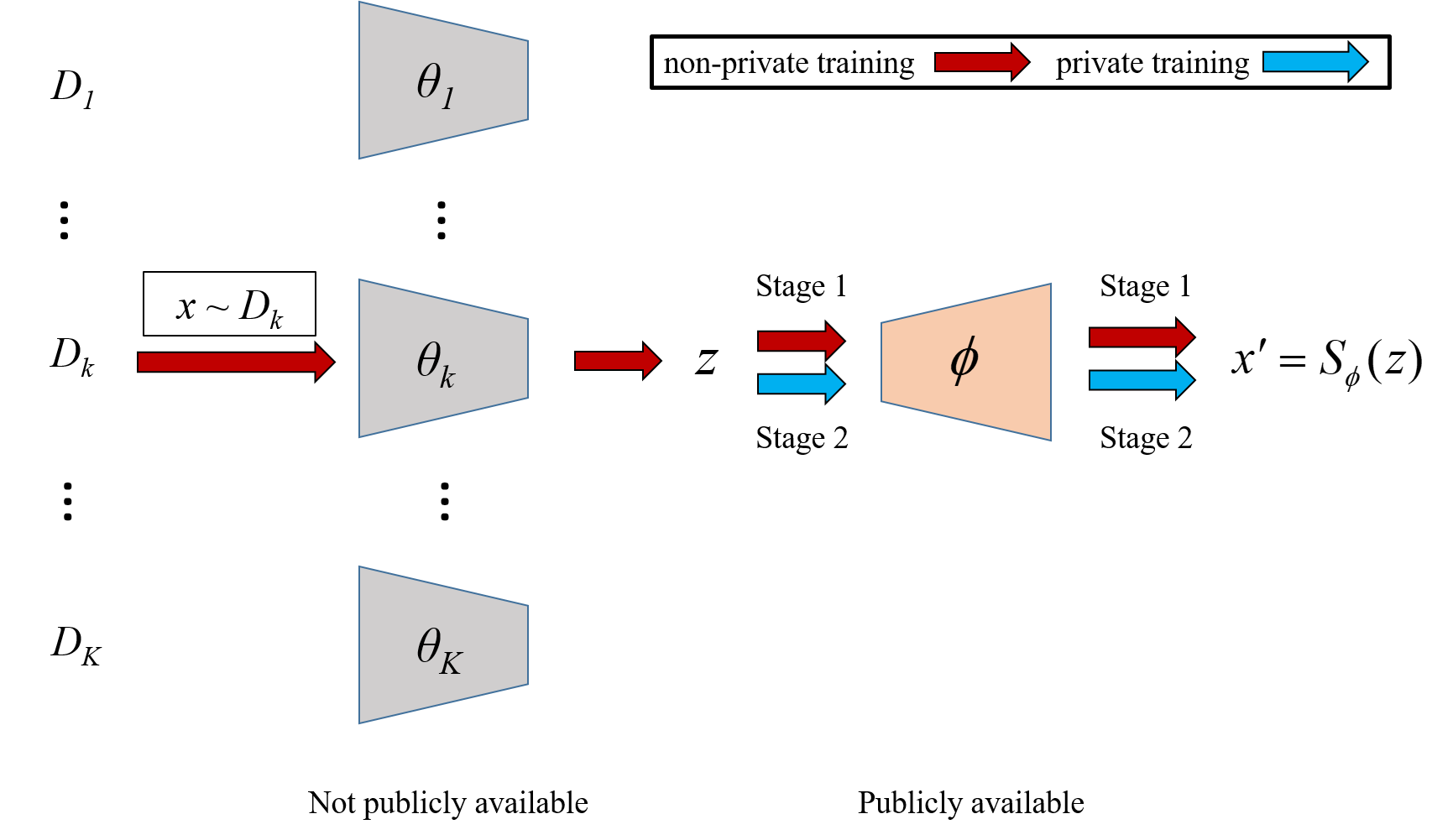}
    \caption{The two-stage strategy of training a DP\textsuperscript{2}-VAE. In stage 1, we (non-privately) pre-train encoder $T_{\theta_i}$ along with a new decoder $S_\phi$ on private input $D_i$ (for $i=1,2,\ldots,K$). In stage 2, we only load pre-trained encoders, and reinitialize the decoder in the beginning, then we train the $S_\phi$ from scratch with private training algorithm while keeping updating encoders normally.}
    \label{fig:framework}
\end{figure}

\end{document}